\documentclass{article} 
\usepackage{iclr2026_conference,times}

\usepackage{amsmath,amsfonts,bm}

\def\eqref#1{equation~\ref{#1}}

\def\1{\bm{1}}

\DeclareMathAlphabet{\mathsfit}{\encodingdefault}{\sfdefault}{m}{sl}
\SetMathAlphabet{\mathsfit}{bold}{\encodingdefault}{\sfdefault}{bx}{n}

\newcommand{\R}{\mathbb{R}}

\usepackage{hyperref}
\usepackage{url}
\usepackage[utf8]{inputenc}
\usepackage{amsmath}
\usepackage{amsfonts}
\usepackage{amssymb}
\usepackage{amsthm}
\usepackage{algorithm}
\usepackage{algorithmic}
\usepackage{graphicx}
\usepackage{booktabs}
\usepackage{natbib}
\usepackage{url}
\usepackage{hyperref}
\usepackage{geometry}
\usepackage{ulem}

\newtheorem{theorem}{Theorem}

\newtheorem{definition}{Definition}

\newcommand{\edit}[1]{\textcolor{black}{#1}}



\title{Reverse Distillation: Consistently Scaling Protein Language Model Representations}

\author{Darius Catrina\thanks{Equal contribution.} \\
Department of Computer Science \\
Duke University \\
Durham, NC, USA \\
\texttt{darius.catrina@duke.edu}
\And
Christian Bepler\footnotemark[1] \\
Department of Computer Science \\
Duke University \\
Durham, NC, USA \\
\texttt{christian.bepler@duke.edu}
\AND
Samuel Sledzieski\thanks{Co-corresponding authors.} \\
Center for Computational Biology \\
Flatiron Institute \\
New York, NY, USA \\
\texttt{ssledzieski@flatironinstitute.org}
\And
Rohit Singh\footnotemark[2] \\
Depts of Biostats.~ and Bioinformatics; and Cell Biology \\
Duke University \\
Durham, NC, USA \\
\texttt{rohit.singh@duke.edu}
}

\iclrfinalcopy 
\begin{document}

\maketitle


\begin{abstract}
Unlike the predictable scaling laws in natural language processing and computer vision, protein language models (PLMs) scale poorly: for many tasks, models within the same family plateau or even decrease in performance, with mid-sized models often outperforming the largest in the family. We introduce \textit{Reverse Distillation}, a principled framework that decomposes large PLM representations into orthogonal subspaces guided by smaller models of the same family. The resulting embeddings have a nested, Matryoshka-style structure: the first $k$ dimensions of a larger model's embedding are exactly the representation from the smaller model. This ensures that larger reverse-distilled models consistently outperform smaller ones. A motivating intuition is that smaller models, constrained by capacity, preferentially encode broadly-shared protein features. Reverse distillation isolates these shared features and orthogonally extracts additional contributions from larger models, preventing interference between the two. On ProteinGym benchmarks, reverse-distilled ESM-2 variants outperform their respective baselines at the same embedding dimensionality, with the reverse-distilled 15 billion parameter model achieving the strongest performance. Our framework is generalizable to any model family where scaling challenges persist. Code and trained models are available at\\ \url{https://github.com/rohitsinghlab/plm_reverse_distillation}.
\end{abstract}

\section{Introduction}

Protein language models (PLMs) have emerged as powerful representation learners, capturing evolutionary patterns from millions of sequences and enabling unprecedented capabilities in structure prediction~\citep{lin2023evolutionary}, function annotation~\citep{yu2023enzyme}, and protein design~\citep{ferruz2022protgpt2,devkota2024miniaturizing}. These models learn rich protein representations through self-supervised training on vast sequence databases. However, unlike the predictable scaling laws observed in natural language processing ~\citep{kaplan2020scaling,hoffmann2022training}, PLMs---and more broadly, biological foundation models---exhibit counterintuitive scaling behavior: smaller models often outperform larger ones on functional prediction tasks ~\citep{li2024feature}. For example, on deep mutational scanning (DMS) benchmarks from ProteinGym \citep{notin2023proteingym}, the ESM-2 family peaks at 650M-3B parameters, with the 15B model showing degraded performance.

This unexpected scaling behavior creates fundamental challenges. Given models $M_1, M_2$ with $|M_2| > |M_1|$ parameters, we observe non-monotonic performance: we often find that smaller models outperform larger models on downstream tasks. Moreover, we cannot reliably predict which biological tasks will exhibit poor scaling behavior, leading to difficulties in model selection for any specific task. A related limitation of PLMs is that embeddings across model scales are disconnected. In contrast, Matryoshka-style embeddings~\citep{kusupati2022matryoshka} in natural language processing are structured such that their prefixes are also directly usable. With these embeddings, prefixes of an overall embedding are themselves functional, albeit with some performance degradation. This enhances computational and storage efficiency, enabling a paradigm of ``embed once, reuse prefixes as needed.''. However, current PLM representations do not offer this advantage: representations of dimension $k$ cannot be truncated to dimension $k' < k$ while maintaining smooth performance degradation.

A motivating intuition for our approach comes from the bias-variance tradeoff. Small PLMs, constrained by capacity, are biased toward encoding frequent, broadly-shared biological regularities---secondary structure propensities, hydrophobicity patterns, or conserved structural motifs. Larger models have the capacity to additionally represent rarer, higher-order phenomena: family-specific patterns, epistatic interactions, allosteric signals. But this additional capacity also introduces variance: when higher-order features are entangled with simpler ones in a single representational space, downstream linear predictors struggle to isolate task-relevant signal. The result is that task-irrelevant features effectively add noise to the fundamental patterns that drive most benchmark performance.

We introduce \textbf{Reverse Distillation}\footnote{The term ``reverse 
distillation'' has appeared in certain teacher-student architectures in ML 
literature. Our usage---decomposing large models using smaller ones as a 
basis---is distinct and we believe intuitive from context.}, a principled 
framework that decomposes large PLM representations into orthogonal 
subspaces anchored by smaller models. Unlike traditional knowledge 
distillation, which compresses large models into small ones, our method 
identifies the unique information contributed by each model scale. Our key 
insight is structural: by treating smaller model representations as a basis 
and extracting orthogonal residuals from the larger model, we prevent 
destructive interference between features at different scales. In doing so, 
reverse distillation introduces a Matryoshka-style relationship between 
embeddings of different model sizes, enabling predictable scaling behavior 
as model size increases.

Formally, given models $M_r$ and $M_p$ where $|M_r| < |M_p|$, with embedding dimensionality $k_r < k_p$, we decompose representations as $H_p \approx [H_r, H_{res}]$ where $H_r \in \mathbb{R}^{n \times k_r}$ captures more fundamental patterns learned by the smaller model and $H_{res} \in \mathbb{R}^{n \times (k_p - k_r)}$ captures unique information from the larger model, orthogonal to $H_r$. We prove that this decomposition is MSE-optimal among all $k_p$-dim representations that fully encompass $M_r$ (i.e., the cylinder set of $M_r$ in $\R^{k_p}$). Our contributions are:

\begin{itemize}
    \item \textbf{Hierarchical Decomposition}: We show how to transform a family of PLM models such that they follow a hierarchical structure where each higher scale adds orthogonal information. Our decomposition is also guaranteed to approximate the original representation space well.
    
    \item \textbf{Matryoshka-style Embeddings and Monotonic Improvement}: Reverse-distilled embeddings are constructed such that nested embeddings of dimensionality $d$ contain prefixes of sizes $d_1 < d_2 < d_3 < d$ such that each prefix is the reverse-distilled embedding of the corresponding dimensionality. Our decomposition thus provides controlled performance degradation as a function of embedding size.
    
    \item \textbf{Scaling Consistency}: Reverse distillation scales nearly always, i.e., larger reverse-distilled models consistently perform better than smaller ones.
    
    \item \textbf{Improvement over Baseline}: \edit{For the ESM-2 family, reverse-distilled models of the same embedding size (e.g., 1280 for ESM-2 650M) generally outperform their corresponding baselines.}
    
\end{itemize}
\begin{figure}[t]
    \centering
    \includegraphics[width=\textwidth]{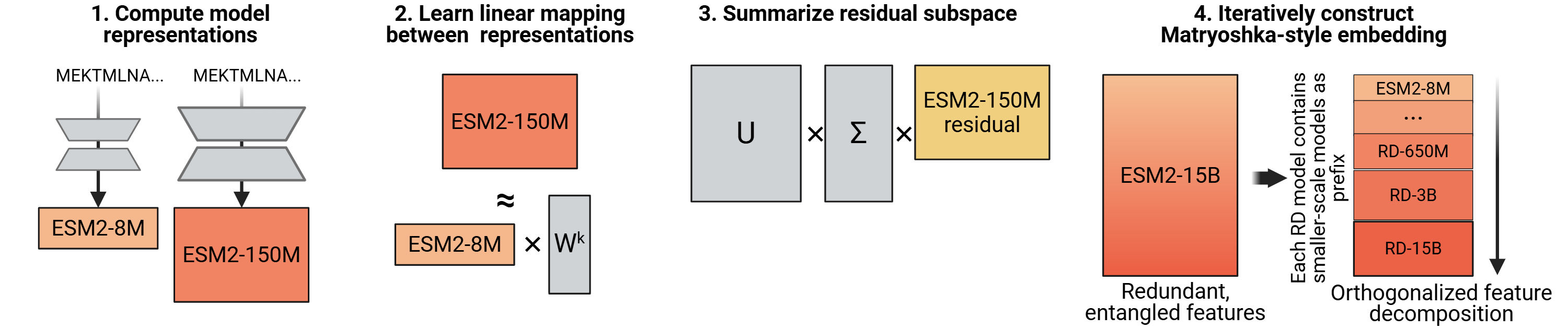}
    \caption{\footnotesize \textbf{Overview of Reverse Distillation} Large protein language models (e.g., ESM-2$_{3B}$) entangle representations of diverse features in a single representational space, hindering the performance of downstream linear probes. Reverse distillation constructs a product space by preserving the smaller model's representation (capturing more conserved features) and extracting orthogonal residuals via SVD (capturing features unique to the larger model). Iterating this process across a model family yields Matryoshka-style embeddings where each prefix corresponds to a valid reverse-distilled representation at that scale. Figure created using biorender.io}
    \label{fig:overview}
\end{figure}

\section{Method}
\paragraph{Motivation and Intuition}
The ESM-2 family spans embedding dimensions from 320 (8M parameters) to 
5120 (15B parameters), providing a systematic testbed for analyzing PLM 
scaling behavior. Each model learns residue co-evolution patterns through 
self-attention mechanisms, but capacity constraints induce different feature 
distributions across model scales.

Small models operate under severe capacity constraints. To minimize loss at training 
time, these models must prioritize the most frequently occurring 
co-evolutionary patterns---those that maximize compression across the 
protein sequence distribution. These patterns likely correspond to general 
protein properties: secondary structure propensities, hydrophobicity 
patterns, and conserved structural motifs. The smallest ESM-2 model (8M 
parameters) lacks sufficient capacity to represent rarer patterns; its 
limited parameters are allocated to features with the highest marginal 
utility across the training distribution.

Large models possess capacity to represent more diverse functional features. 
They can encode enzyme-specific catalytic motifs, protein family-specific 
allosteric couplings, and higher-order interaction patterns. However, as 
\cite{li2024feature} demonstrated, downstream performance often relies on 
early, low-level features rather than additional capacity; consequently, 
linear probes on larger, mixed representations frequently fail to isolate 
task-relevant signal from task-irrelevant variance.

Our key intuition is that smaller models provide a natural basis for 
decomposition (Fig.~\ref{fig:overview}). A smaller model from the same 
family---trained on identical data with the same architecture---produces 
representations biased toward broadly-shared features due to capacity 
constraints. By computing the orthogonal complement of the smaller model's 
subspace within the larger model's representation, we achieve partial 
separation of features without requiring explicit feature identification.

We employ linear decomposition throughout our framework to maintain 
interpretability. Linear methods reveal directly accessible information in 
representations without confounding from nonlinear prediction heads, and 
enable precise attribution of information to specific model scales. While 
nonlinear methods might achieve higher downstream performance, linear 
decomposition provides the analytical tractability necessary to characterize 
how biological information distributes across the model hierarchy.

\subsection{Problem Formulation}
Consider a hierarchy of protein language models $\mathcal{M} = \{M_1, M_2, \ldots, M_m\}$ ordered by parameter count. Each model $M_i$ maps sequences to embeddings:
$$M_i: \mathcal{P}^* \rightarrow \mathbb{R}^{n \times k_i}$$
where $\mathcal{P}$ is the amino acid alphabet, $n$ varies per sequence, and $k_1 < k_2 < \ldots < k_m$ are embedding dimensions. For the ESM-2 family, this hierarchy exists naturally, with model scales at 8M ($k_1$ = 320), 35M ($k_2$ = 640), 150M ($k_3$ = 480), 650M ($k_4$ = 1280), 3B ($k_5$ = 2560), and 15B ($k_6$ = 5120) parameters.

While our framework does not require monotonically increasing dimensions---appropriate dimensionality reduction via PCA or variational autoencoders could enable reverse distillation between arbitrary model pairs---the ESM-2 family's architecture provides this structure directly, simplifying our implementation.

For a protein sequence dataset $\mathcal{D} = \{s_i\}_{i=1}^N$ with sequence lengths $\{n_i\}$, the total amino acid positions $L = \sum_{i=1}^N n_i$ represents our effective sample size for learning linear relationships. This formulation (treating all positions as samples) enables data-efficient subspace learning. For training, we used $N=1,000$ sequences sampled randomly from UniRef50; all sequences had 30\% or lower sequence identity to datasets used in Section 3. Due to the simple linear transforms involved in this work, we found this $N$ to be sufficient.

\paragraph{Reverse Distillation Decomposition}
\begin{definition}[Reverse Distillation Decomposition]
Given models $M_r$ and $M_p$ where $r < p$, we decompose the representation space $\mathbb{R}^{n \times k_p}$ into orthogonal subspaces:
$$\mathcal{S}_p = \mathcal{S}_r \oplus \mathcal{S}_{res}$$
where $\mathcal{S}_r \cong \mathbb{R}^{n \times k_r}$ preserves $M_r$'s representations and $\mathcal{S}_{res} \cong \mathbb{R}^{n \times (k_p - k_r)}$ captures orthogonal residual information.
\end{definition}

We express any representation $H_p \in \mathbb{R}^{n \times k_p}$ as:
$$H_p \approx [H_r, H_{res}]$$
where $H_r \in \mathbb{R}^{n \times k_r}$ comes directly from the smaller model $M_r$, and $H_{res} \in \mathbb{R}^{n \times (k_p - k_r)}$ represents the unique contribution of the larger model.

We preserve entire smaller models rather than selecting subsets of their dimensions. This choice, enabled by the natural progression of embedding sizes (320→640→1280→2560→5120 in ESM-2), maintains interpretability---we know $H_r$ represents the complete feature space learned by the smaller model, making the residual space $\mathcal{S}_{res}$ directly interpretable as features unique to the larger model.

\subsection{Algorithms}

Algorithm~\ref{alg:reverse_distill_pretrain} presents our training procedure. Phase 1 comprises forward passes of the $M_r$ and $M_p$, computing representations of the same sequences at different scales.

\begin{algorithm}[ht]
\caption{Reverse Distillation Algorithm (Pre-training)}
\label{alg:reverse_distill_pretrain}
\begin{algorithmic}[1]
\REQUIRE Dataset $\mathcal{D} = \{s_i\}_{i=1}^N$ where $|s_i| = n_i$, models $M_r$, $M_p$ with $r < p$
\ENSURE Subspace decomposition matrices $\mathbf{W}^*$, $\mathbf{V}_{res}$

\STATE \textbf{Phase 1: Compute Representations}
\FOR{$i = 1$ to $N$}
    \STATE $H_r^{(i)} = M_r(s_i) \in \mathbb{R}^{n_i \times k_r}$ \COMMENT{Variable length $n_i$}
    \STATE $H_p^{(i)} = M_p(s_i) \in \mathbb{R}^{n_i \times k_p}$
\ENDFOR

\STATE \textbf{Phase 2: Learn Linear Mappings}
\STATE Define total length: $L = \sum_{i=1}^N n_i$
\STATE Stack representations: $\tilde{H}_r = \text{vstack}(H_r^{(1)}, \ldots, H_r^{(N)}) \in \mathbb{R}^{L \times k_r}$
\STATE Stack representations: $\tilde{H}_p = \text{vstack}(H_p^{(1)}, \ldots, H_p^{(N)}) \in \mathbb{R}^{L \times k_p}$
\STATE Solve: $\mathbf{W}^* = \arg\min_{\mathbf{W}} \|\tilde{H}_p - \tilde{H}_r\mathbf{W}\|_F^2$ \hfill $\triangleright$ Via prin. comp. reg. (PCR); noise PCs of $\tilde{H}_r$ removed
\STATE Compute residuals: $\mathbf{R} = \tilde{H}_p - \tilde{H}_r\mathbf{W}^* \in \mathbb{R}^{L \times k_p}$

\STATE \textbf{Phase 3: Subspace Identification}
\STATE Apply SVD: $\mathbf{R} = \mathbf{U}\boldsymbol{\Sigma}\mathbf{V}^T$
\STATE Select top $(k_p - k_r)$ components: $\mathbf{V}_{res} = \mathbf{V}[:, 1:(k_p-k_r)]$

\RETURN $\mathbf{W}^*$, $\mathbf{V}_{res}$
\end{algorithmic}
\end{algorithm}

\paragraph{Refinements to the linear mapping.} In Phase 2 of Algorithm~1, 
we use principal component regression (PCR) rather than ordinary least squares. 
Since some dimensions of $\tilde{H}_r$ are likely noise, we project 
onto its principal components and apply the Johnstone 
threshold~\citep{johnstone2001distribution} from random matrix theory to 
discard likely-noise components before regression 
(Appendix~\ref{subsec:algo1_pcr_ablation}). We also verified that reverse 
distillation performs comparably to a stronger but non-Matryoshka baseline: 
concatenating all model scales and reducing via PCA 
(Appendix~\ref{subsec:algo1_pca_ablation}).

Algorithm~\ref{alg:reverse_distill_inference} shows inference. The decomposed representation $H_{rd} = [H_r, H_{res}]$ is Matryoshka by construction---prefixes correspond to valid smaller model outputs, enabling adaptive compute at deployment.

\begin{algorithm}[b]
\caption{Reverse Distillation Inference}
\label{alg:reverse_distill_inference}
\begin{algorithmic}[1]
\REQUIRE New sequence $s$ with $|s| = n$, learned matrices $\mathbf{W}^*$, $\mathbf{V}_{res}$, models $M_r$, $M_p$
\ENSURE Decomposed representation $H_{rd} \in \mathbb{R}^{n \times k_p}$

\STATE $H_r = M_r(s) \in \mathbb{R}^{n \times k_r}$ \COMMENT{Smaller model embedding}
\STATE $H_p = M_p(s) \in \mathbb{R}^{n \times k_p}$ \COMMENT{Larger model embedding}
\STATE $H_{pred} = H_r \mathbf{W}^* \in \mathbb{R}^{n \times k_p}$ \COMMENT{Predicted large model embedding}
\STATE $R = H_p - H_{pred} \in \mathbb{R}^{n \times k_p}$ \COMMENT{Unexplained residuals}
\STATE $H_{res} = R \mathbf{V}_{res} \in \mathbb{R}^{n \times (k_p-k_r)}$ \COMMENT{Projected residuals}
\STATE $H_{rd} = [H_r, H_{res}] \in \mathbb{R}^{n \times k_p}$ \COMMENT{Concatenate reference + residual}
\RETURN $H_{rd}$
\end{algorithmic}
\end{algorithm}

Algorithm~\ref{alg:chained} extends reverse distillation from a pair of models to a whole family of models. Chaining reveals hierarchical structure where each scale contributes orthogonal information that cannot be linearly predicted from smaller models.

\begin{algorithm}[ht]
\caption{Chained Reverse Distillation}
\label{alg:chained}
\begin{algorithmic}[1]
\REQUIRE Dataset $\mathcal{D}$, model hierarchy $\{M_1, \ldots, M_m\}$
\ENSURE Decomposition components $\{\mathbf{W}_i, \mathbf{V}_i\}_{i=2}^m$

\STATE Initialize: $H_{acc}^{(1)} = M_1(\mathcal{D})$, $k_{acc}^{(1)} = k_1$
\FOR{$i = 2$ to $m$}
    \STATE $H_i = M_i(\mathcal{D})$
    \STATE Learn predictor: $\mathbf{W}_i = \arg\min_{\mathbf{W}} \|H_i - H_{acc}^{(i-1)}\mathbf{W}\|_F^2$
    \STATE Compute residuals: $R_i = H_i - H_{acc}^{(i-1)}\mathbf{W}_i$
    \STATE Apply SVD: $R_i = \mathbf{U}_i\boldsymbol{\Sigma}_i\mathbf{V}_i^T$
    \STATE Select components: $\mathbf{V}_i = \mathbf{V}_i[:, 1:(k_i - k_{acc}^{(i-1)})]$
    \STATE Update: $H_{acc}^{(i)} = [H_{acc}^{(i-1)}, R_i\mathbf{V}_i]$
    \STATE Update: $k_{acc}^{(i)} = k_i$
\ENDFOR
\RETURN $\{\mathbf{W}_i, \mathbf{V}_i\}_{i=2}^m$
\end{algorithmic}
\end{algorithm}

\paragraph{Theoretical Analysis}
Let $\mathcal{M}_r \subset \mathbb{R}^{k_r}$ be the manifold spanned by embeddings of $M_r$. \edit{To enable scalability and flexibility in our representation space, it is desirable to enforce the Matryoshka property on our embeddings. Thus, we consider the set of all $k_p$-dimensional representations that preserve $M_r$'s embeddings in their first $k_r$ coordinates:}
$$\mathcal{C}_r = \{[H_r, X] : H_r \in \mathcal{M}_r, X \in \mathbb{R}^{L \times (k_p - k_r)}\}$$

Our decomposition $H_{rd} = [H_r, H_{res}]$ minimizes reconstruction error within this constrained space:

\begin{theorem}[Optimal Constrained Approximation]
\label{thm:mse_optimal}
Let $\tilde{H}_p \in \mathbb{R}^{L \times k_p}$ and $\tilde{H}_r \in \mathbb{R}^{L \times k_r}$ be stacked representations from models $M_p$ and $M_r$ respectively, where $r < p$. Among all representations of the form $[\tilde{H}_r, X]$ where $X \in \mathbb{R}^{L \times (k_p - k_r)}$, the representation $H_{rd} = [\tilde{H}_r, H_{res}]$ with $H_{res}$ derived from the top $(k_p - k_r)$ singular vectors of the residual $\mathbf{R} = \tilde{H}_p - \tilde{H}_r\mathbf{W}^*$ minimizes:
$$\min_{A \in \mathbb{R}^{k_p \times k_p}} \|\tilde{H}_p - [\tilde{H}_r, X]A\|_F^2$$
\end{theorem}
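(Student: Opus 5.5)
The plan is to reduce the objective to a classical low-rank approximation problem and then invoke Eckart--Young--Mirsky. Fix $X$ and write $Z=[\tilde{H}_r, X]\in\mathbb{R}^{L\times k_p}$. The inner minimization over $A$ is an ordinary least-squares problem. Its value is $\|(I-P_Z)\tilde{H}_p\|_F^2$, where $P_Z$ is the orthogonal projector onto $\mathrm{col}(Z)=\mathrm{col}(\tilde{H}_r)+\mathrm{col}(X)$. The quantity to minimize over $X$ is therefore the error of projecting $\tilde{H}_p$ onto a subspace that must contain $\mathrm{col}(\tilde{H}_r)$ and may add at most $k_p-k_r$ further dimensions.

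Next I would split that subspace orthogonally. Let $P_r$ be the projector onto $\mathrm{col}(\tilde{H}_r)$ and set $X'=(I-P_r)X$. Then $\mathrm{col}(Z)=\mathrm{col}(\tilde{H}_r)\oplus\mathrm{col}(X')$ with orthogonal summands, so $P_Z=P_r+P_{X'}$. This gives
\begin{equation*}
\|(I-P_Z)\tilde{H}_p\|_F^2=\|(I-P_{X'})\mathbf{R}\|_F^2,\qquad \mathbf{R}=(I-P_r)\tilde{H}_p .
\end{equation*}
The identity uses $P_{X'}P_r=0$. I would then check that $\mathbf{R}$ coincides with the residual $\tilde{H}_p-\tilde{H}_r\mathbf{W}^*$ of Algorithm~\ref{alg:reverse_distill_pretrain}, because the ordinary least-squares $\mathbf{W}^*$ gives exactly $\tilde{H}_r\mathbf{W}^*=P_r\tilde{H}_p$.

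The problem becomes: over subspaces $\mathcal{V}$ of dimension at most $k_p-k_r$ lying in $\mathrm{col}(\tilde{H}_r)^\perp$, minimize $\|(I-P_{\mathcal{V}})\mathbf{R}\|_F^2$. Since $P_{\mathcal{V}}\mathbf{R}$ has rank at most $k_p-k_r$, Eckart--Young--Mirsky bounds this below by $\sum_{j>k_p-k_r}\sigma_j(\mathbf{R})^2$. The bound is attained by $\mathcal{V}=\mathrm{col}(\mathbf{R}\mathbf{V}_{res})$, the span of the top $k_p-k_r$ left singular vectors of $\mathbf{R}$. That span automatically lies in $\mathrm{col}(\tilde{H}_r)^\perp$, because $\mathrm{col}(\mathbf{R})$ does, so the constraint costs nothing. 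Taking $X=H_{res}=\mathbf{R}\mathbf{V}_{res}=\mathbf{U}_{:,1:k_p-k_r}\Sigma_{1:k_p-k_r}$ therefore achieves the minimum. The optimal $A$ can be written down explicitly: stack $\mathbf{W}^*$ on top of $\mathbf{V}_{res}^T$, since $\tilde{H}_r\mathbf{W}^*+\mathbf{R}\mathbf{V}_{res}\mathbf{V}_{res}^T$ is exactly the rank-truncated reconstruction.

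The main obstacle is the gap between the clean argument and the paper's actual Phase 2. That step uses principal component regression rather than ordinary least squares, so $\tilde{H}_r\mathbf{W}^*$ is only the projection onto the retained principal subspace of $\tilde{H}_r$, not onto all of $\mathrm{col}(\tilde{H}_r)$. Then $\mathbf{R}$ still contains a component in $\mathrm{col}(\tilde{H}_r)$, and the orthogonal splitting above fails. I would handle this by stating the theorem with $\mathbf{W}^*$ the least-squares solution, so the result is exact. For the PCR variant I would add a remark: $A$ is free to use all of $\tilde{H}_r$, the discarded principal components are by construction those explaining negligible variance, and the excess error is bounded by the part of $\tilde{H}_p$ lying in those directions. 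A secondary technicality is rank deficiency of $\tilde{H}_r$ or ties among singular values. These only make the minimizer non-unique and need pseudoinverses in $P_r$; they do not affect optimality.
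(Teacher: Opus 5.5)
Your proposal is correct and follows the same route as the paper's proof. Both take the OLS $\mathbf{W}^*$ so that $\mathbf{R}$ is orthogonal to $\mathrm{col}(\tilde{H}_r)$ and then apply Eckart--Young to $\mathbf{R}$; your splitting $P_Z = P_r + P_{X'}$, together with the check that the top singular subspace of $\mathbf{R}$ already lies in $\mathrm{col}(\tilde{H}_r)^\perp$, supplies the reduction that the paper's sketch only asserts. Your PCR caveat is also well taken, since the paper's proof uses the OLS formula $\mathbf{W}^* = (\tilde{H}_r^T\tilde{H}_r)^{-1}\tilde{H}_r^T\tilde{H}_p$ rather than the PCR estimator actually used in Phase 2, so the theorem holds exactly only for the OLS choice.
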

\begin{proof}
The proof directly follows from the Eckart-Young theorem. The optimal linear predictor is $\mathbf{W}^* = (\tilde{H}_r^T\tilde{H}_r)^{-1}\tilde{H}_r^T\tilde{H}_p$, minimizing the reconstruction error over all $\mathbf{W} \in \mathbb{R}^{k_r \times k_p}$. The residual $\mathbf{R} = \tilde{H}_p - \tilde{H}_r\mathbf{W}^*$ contains information orthogonal to $\tilde{H}_r$. For $\mathbf{R} = \mathbf{U}\boldsymbol{\Sigma}\mathbf{V}^T$, the optimal rank-$(k_p - k_r)$ approximation uses the top $(k_p - k_r)$ singular vectors. 
\end{proof}

\section{Experiments}

\subsection{Initial Exploration of Model Chain Configuration}
We began by investigating the optimal chaining of small models into larger models. For three DMS datasets from ProteinGym \citep{notin2023proteingym}, we evaluated a range of chain configurations. Let $K=\{k_0,k_1, \dots,k_n\}$ denote the $n+1$ model sizes from a model family \edit{(for ESM-2: $K=\{8M, 35M, 150M, 650M, 3B, 15B\}$)}. For a target embedding $k_t$ with $t\in[0, n]$, the chain configuration was defined as follows: 
\begin{enumerate}
    \item for each $k_i \in [0, n]$ with $i<t$, a direct chain $k_i\rightarrow k_t$
    \item longest chain: $k_0\rightarrow k_1\rightarrow\dots\rightarrow k_t$
\end{enumerate}

As shown in Table 1, a consistent trend emerged in which longer incremental chains yielded improved performance. Consequently, we concentrated our comprehensive experiments on the results obtained from reverse distillation of the two largest models.

In the rest of the paper, we denote the chain $k_{8M}\rightarrow\dots\rightarrow k_{650M}$ as \textbf{rd.650}, the chain $k_{8M}\rightarrow\dots\rightarrow k_{3B}$ as  \textbf{rd.3B}, \edit{and the chain $k_{8M} \rightarrow \dots \rightarrow k_{15B}$ as \textbf{rd.15B}}. 

\subsection{ProteinGym DMS Analysis}
For a comprehensive analysis, we obtained datasets from ProteinGym with at least one double- or multi-mutation variant. To ensure that our evaluation estimates were reliable, we excluded datasets with fewer than 100 single-mutation variants. Given an embedding scheme, for each protein in the dataset, we loaded the embedding of the wild-type sequence and the embeddings of the mutated sequence. For each mutation, we computed the difference vector between the embeddings of the mutated sequence and the corresponding wild-type sequence at the mutated position, feeding it into a ridge regression classifier. For variants with multiple mutations, we first average the differences across all mutated positions. We fit the ridge regression on 80\% of the single-mutational variants using leave-one-out cross-validation. The trained model was used to predict for all multiple-mutants and the remaining single-mutant cases. Note that since rd.650M (rd.3B) is a prefix of rd.3B (rd.15B) by construction (the first 1280 (2560) dimensions are identical), any cases where rd.650M (rd.3B) outperforms rd.3B (rd.15B) likely reflect ridge regression artifacts rather than representational limitations. For each DMS dataset, we computed the Spearman correlation between our predicted scores and the ground truth; we followed the original ProteinGym manuscript in our choice of the Spearman metric. We report an estimated per-dataset measure of improvement, asking ``in how many datasets does model $M_1$ outperfom $M_2$?'' (Table \ref{tab:esm_comparison_results}) along with the mean and standard deviations of these correlations for both the ESM-2 family of models and their reverse-distilled version (Table \ref{tab:esm_correlation_results}). 

\begin{table}[t!]
\centering
\caption{\edit{\textbf{Progressive chain vs. direct chain.} Our approach supports reverse distillation of any smaller model into any larger model. However, we find empirically that the best performance comes from progressively distilling up a ``chain'' of models. In the remainder of this paper, we refer to these progressive chains as rd.650M, rd.3B, and rd.15B.}}
\small{
\begin{tabular}{lcccc}
\toprule
\textbf{ESM Models} & 
\textbf{\shortstack{\scriptsize ARGR\_ECOLI \\ \scriptsize Tsuboyama\_2023\_1AOY}} & 
\textbf{\shortstack{\scriptsize DN7A\_SACS2 \\ \scriptsize Tsuboyama\_2023\_1JIC}} & 
\textbf{\shortstack{\scriptsize ILF3\_HUMAN \\ \scriptsize Tsuboyama\_2023\_2L33}} \\
\midrule
8M & 0.771 & 0.746 & 0.670 & \\
\midrule
35M & 0.767 & 0.806 & 0.692 & \\
rd: 8M$\rightarrow$35M & 0.776 & 0.793 & 0.701 & \\
\midrule
150M & 0.799 & 0.786 & 0.760 & \\
rd: 35M$\rightarrow$150M & 0.811 & 0.791 & 0.772 & \\
rd: 8M$\rightarrow$150M & 0.820 & 0.792 & 0.779 & \\
\midrule
650M & 0.834 & 0.868 & 0.712 & \\
rd: 8M$\rightarrow$650M & 0.849 & 0.878 & 0.765 & \\
rd: 35M$\rightarrow$650M & 0.835 & \textbf{0.881} & 0.759 & \\
rd: 150M$\rightarrow$650M & 0.845 & 0.866 & 0.751 & \\
rd: 8M$\rightarrow$35$\rightarrow$150$\rightarrow$650M (rd.650M) & \textbf{0.858} & 0.867 & \textbf{0.786} & \\
\midrule
3B & 0.845 & 0.880 & 0.749 & \\
rd: 8M$\rightarrow$3B & 0.852 & \textbf{0.898} & 0.780 & \\
rd: 35M$\rightarrow$3B & 0.844 & 0.894 & 0.777 & \\
rd: 150M$\rightarrow$3B & 0.853 & 0.886 & 0.775 & \\
rd: 650M$\rightarrow$3B & 0.859 & 0.880 & 0.751 & \\
rd: 8$\rightarrow$35$\rightarrow$150$\rightarrow$650$\rightarrow$3B (rd.3B) & \textbf{0.873} & 0.890 & \textbf{0.801} & \\
\bottomrule
\end{tabular}
}
\end{table}

\begin{table}[b]
\centering
\caption{\edit{\textbf{Reverse Distillation restores scaling on ProteinGym benchmarks.} We show that not only do rd.650M, rd.3B, and rd.15B consistently outperform their baseline counterparts, but that reverse distillation preserves expected scaling, i.e. larger models more frequently outperform smaller models.}}
\label{tab:esm_comparison_results}
\resizebox{\textwidth}{!}{%
\begin{tabular}{l c c c c | c c | c c}
\toprule
& \textbf{\#} & \multicolumn{7}{c}{\textbf{\% ProteinGym DMS Datasets where one model outperforms another}} \\
& \textbf{\shortstack{ProteinGym\\DMS Datasets}} & \multicolumn{7}{c}{} \\
\cmidrule(lr){3-9}
& & \textbf{rd 650M $>$ 650M} & \textbf{rd 3B $>$ 3B} & \textbf{rd 15B $>$ 15B} & \textbf{3B $>$ 650M} & \textbf{15B $>$ 3B} & \textbf{rd 3B $>$ rd 650M} & \textbf{rd 15B $>$ rd 3B} \\
\midrule
\multicolumn{9}{l}{\textbf{ProteinGym DMS Datasets with 1 and 2 mutations}} \\
\addlinespace
1 mut & 28 & 71.43\% & 85.71\% & 67.86\% & 53.57\% & 92.86\% & 92.86\% & 85.71\% \\
2 mut & 28 & 53.57\% & 53.57\% & 57.14\% & 46.43\% & 85.71\% & 67.86\% & 75.00\% \\
\midrule
\multicolumn{9}{l}{\textbf{ProteinGym DMS Datasets with $>$2 mutations}} \\
\addlinespace
1 mut & 6 & 83.33\% & 16.67\% & 66.67\% & 100.00\% & 33.33\% & 100.00\% & 66.67\% \\
2 mut & 6 & 33.33\% & 66.66\% & 83.33\% & 33.33\% & 83.33\% & 100.00\% & 100.00\% \\
3 mut & 6 & 66.66\% & 66.66\% & 66.66\% & 66.67\% & 83.33\% & 100.00\% & 100.00\% \\
4 mut & 6 & 50.00\% & 50.00\% & 50.00\% & 50.00\% & 66.67\% & 100.00\% & 100.00\% \\
\bottomrule
\end{tabular}%
}
\end{table}

\begin{table}[htbp]
\centering
\caption{\edit{\textbf{Spearman correlation of predicted mutational effect on ProteinGym benchmarks.} We report the performance of a classifier trained on embeddings from various models to predict variant effect on deep mutational scanning (DMS) data from ProteinGym. rd.15B achieves the strongest performance out of any model tested, and the reverse distilled models generally outperform their baseline counterparts.}}
\label{tab:esm_correlation_results}
\resizebox{\textwidth}{!}{%
\begin{tabular}{l c c c | c c | c c}
\toprule
& & \multicolumn{6}{c}{\textbf{Test Spearman correlation}} \\
& \textbf{\shortstack{\# ProteinGym\\DMS Datasets}} & \multicolumn{6}{c}{(\textbf{mean} $\pm$ \textbf{std})} \\
\cmidrule(lr){3-8}
& & \textbf{650M} & \textbf{rd 650M} & \textbf{3B} & \textbf{rd 3B} & \textbf{15B} & \textbf{rd 15B} \\
\midrule
\multicolumn{8}{l}{\textbf{ProteinGym DMS Datasets with 1 and 2 mutations}} \\
\addlinespace
1 mut & 28 & 0.879 ($\pm$ 0.040) & \textbf{0.885 ($\pm$ 0.038)} & 0.881 ($\pm$ 0.043) & \textbf{0.893 ($\pm$ 0.039)} & 0.899 ($\pm$ 0.036) & \textbf{0.904 ($\pm$ 0.037)} \\
2 mut & 28 & 0.672 ($\pm$ 0.140) & \textbf{0.688 ($\pm$ 0.133)} & 0.672 ($\pm$ 0.128) & \textbf{0.697 ($\pm$ 0.115)} & 0.714 ($\pm$ 0.115) & \textbf{0.720 ($\pm$ 0.120)} \\
\midrule
\multicolumn{8}{l}{\textbf{ProteinGym DMS Datasets with $>$2 mutations}} \\
\addlinespace
1 mut & 6 & 0.534 ($\pm$ 0.192) & \textbf{0.550 ($\pm$ 0.187)} & \textbf{0.591 ($\pm$ 0.189)} & 0.574 ($\pm$ 0.165) & 0.579 ($\pm$ 0.180) & \textbf{0.580 ($\pm$ 0.189)} \\
2 mut & 6 & \textbf{0.608 ($\pm$ 0.131)} & 0.600 ($\pm$ 0.137) & 0.621 ($\pm$ 0.146) & \textbf{0.643 ($\pm$ 0.132)} & 0.653 ($\pm$ 0.156) & \textbf{0.677 ($\pm$ 0.134)} \\
3 mut & 6 & \textbf{0.578 ($\pm$ 0.056)} & 0.577 ($\pm$ 0.068) & 0.583 ($\pm$ 0.103) & \textbf{0.617 ($\pm$ 0.082)} & 0.607 ($\pm$ 0.183) & \textbf{0.652 ($\pm$ 0.084)} \\
4 mut & 6 & \textbf{0.555 ($\pm$ 0.067)} & 0.539 ($\pm$  0.089) &  0.552 ($\pm$ 0.145) & \textbf{0.576 ($\pm$ 0.090)} & 0.566 ($\pm$ 0.171) & \textbf{0.615 ($\pm$ 0.086)} \\
\bottomrule
\vspace{0.2in}
\end{tabular}%
}
\end{table}

\begin{table}[b!]
\centering
\caption{\edit{\textbf{Reverse distillation of ESM-2 improves performance on downstream protein property prediction tasks.} We evaluate ESM-2 650M, 3B and 15B and their corresponding reverse-distilled versions on four data sets from \cite{chen2024xtrimopglm}: 3-class secondary structure prediction (SSP Q3), 8-class secondary structure prediction (SSP Q8), metal ion binding (MIB), and localization prediction (LOC)) We also report on R2/R1 prediction from \cite{wayment2025learning}. Across nearly all data sets, we find that rd.15B achieves the strongest performance.}}
\footnotesize
\begin{tabular}{lcccccc}
\toprule
\textbf{Dataset} & \textbf{650M} & \textbf{rd 650M} & \textbf{3B} & \textbf{rd 3B} & \textbf{15B} & \textbf{rd 15B}\\
\midrule
SSP Q3 (\textit{aupr} $\uparrow$) & 0.831 & 0.833 & 0.791 & 0.816 & 0.845 & \textbf{0.861} \\
SSP Q8 (\textit{aupr} $\uparrow$)& 0.365 & 0.369 & 0.379 & 0.395 & 0.418 & \textbf{0.431}\\
MIB (\textit{aupr} $\uparrow$) & 0.881 & 0.855 & 0.893 & 0.891 & 0.900 & \textbf{0.901} \\
LOC (\textit{aupr} $\uparrow$) & 0.709 & 0.669 & 0.705 & 0.708 & \textbf{0.745} & 0.743 \\
R2/R1 (\textit{aupr} $\uparrow$) & 0.343 & 0.405 & 0.369 & 0.425 & 0.368 & \textbf{0.468} \\
\bottomrule
\end{tabular}
\label{tab:biomap}
\end{table}

\subsection{Additional Protein Property Prediction} We evaluated our reverse-distilled models on several downstream protein property prediction tasks where the prediction task directly corresponded to protein structural, functional, and dynamic features. We utilized the 3-class secondary structure prediction \emph{(SSP Q3)}, 8-class secondary structure prediction \emph{(SSP Q8)}, metal ion binding \emph{(MIB)} and localization prediction \emph{(LOC)} benchmarks from \cite{chen2024xtrimopglm}, as well as the R2/R1 prediction task from \cite{wayment2025learning}. Training was performed analogously to the previous setting. The reverse-distilled models frequently outperformed the base models (Table \ref{tab:biomap}) and demonstrated consistent scaling, with rd.15B nearly always outperforming rd.3B and the baseline ESM models.


\subsection{Probing embeddings with Sparse Autoencoders}

\edit{Finally, we sought to explore whether our reverse-distilled embeddings could  disentangling representations of biological features by training sparse autoencoders (SAE). We followed the analysis of \cite{gujral2025sae}, training an SAE on embeddings from both the base ESM-2 35M and our rd.35M model. Using a set of 18,142 proteins from the UniProt database, we identified the proteins significantly associated with each sparse feature, then performed a GO enrichment analysis using annotations for the proteins \citep{ashburner2000gene} to identify the set of GO terms associated with each sparse feature. We found that the SAE trained on rd.35M embeddings contained more enriched GO terms (40 GO terms for the average rd.35M SAE feature, vs. 32 for the average ESM-2 35M SAE feature) indicating that these embeddings capture more functional features than those from the base model (Figure \ref{fig:sae}a).}

In addition, the DAG relationship between GO terms allows us to probe the relationships within each functional set. Following \cite{gujral2025sae} we compute ``monospecificity,'' the inverse of average shortest pairwise distance between GO terms in the same set (Figure \ref{fig:sae}b), and ``generality,'' the depth of the least common ancestor (LCA) of each pair of terms in the set (Figure \ref{fig:sae}c). A high monospecificity means that SAE features capture a more consistent set of functional features, while a high generality means that SAE features capture less specific biological function. While rd.35M and ESM-2 35M embeddings have similar levels of monospecificity, rd.35M embeddings are significantly less general, lending support to our hypothesis that reverse distillation helps to disentangle representations of biological features within the model.

\begin{figure}[t!]
    \centering
    \includegraphics[width=\textwidth]{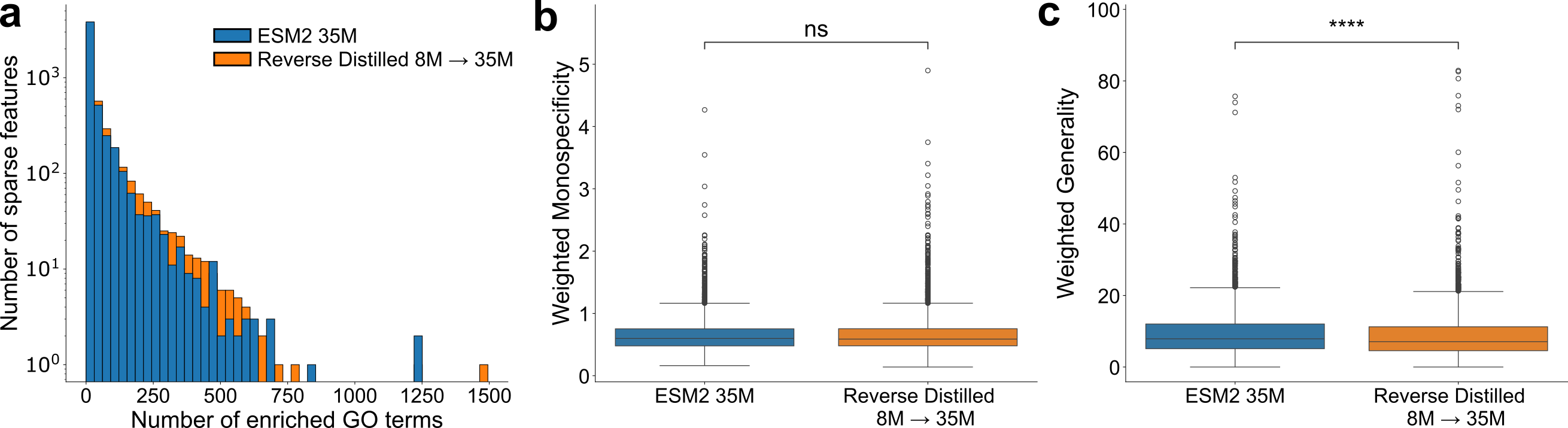}
    \caption{\edit{\footnotesize \textbf{Reverse Distillation embeddings capture more  GO terms.} \textbf{(a)} SAE features from the rd.35M model are enriched for more GO terms than those from the base model, indicating that they contain more functionally relevant information. \textbf{(b)} The sets of GO terms for each model are equally compact, measured by pairwise shortest path on the GO tree. \textbf{(c)} The sets of GO terms for rd.35M SAE features are significantly less general, measured by the depth of the pairwise least common ancestor on the GO tree. While features from the base model capture high-level GO terms, reverse distillation enables model features to represent distinct functions.}}
    \label{fig:sae}
\end{figure}

\subsection{Inference time} On an Nvidia A6000 GPU, embedding a protein sequence (mean length = 536) took 0.09s and 0.249s for the ESM-2 650M and 3B models respectively. Even though rd.650M involves four ESM model-invocations (8M, 35M, 150M and 650M) it only took 1.69x(=0.152s) the time as the smaller models have faster inference. Similarly, rd.3B makes five model-invocations but took only \edit{1.53x (=0.380s)} the time compared to baseline ESM-2 3B, and the six model-invocations of rd.15B take only 1.70x as long as the base model. Thus, reverse distillation does not have a prohibitive inference overhead. We also note that the prefix structure of the embeddings enhances reusability. 



\section{Related Work}

\paragraph{The PLM scaling problem is well-documented.} 
The ProteinGym benchmark shows that performance gains plateau around 1--4B 
parameters, with hybrids leveraging MSAs or structure often winning on 
zero-shot fitness~\citep{Notin2025ScalingWall}. Multiple lines of evidence 
reinforce this ceiling: improvements with scale are largely 
task-dependent, with linear probes on large representations struggling to 
isolate task-relevant signal~\citep{li2024feature}; the ESM-2 3B model 
delivers contact-recovery signals comparable to the 15B 
variant~\citep{zhang2024protein}; medium-sized models perform competitively 
in realistic transfer settings~\citep{vieira2025medium}; and variant-effect 
accuracy peaks at intermediate model perplexity, with both under- and 
over-confident models degrading 
discrimination~\citep{hou2025understanding}. Complementary work on 
representational redundancy shows that PLM embeddings can be substantially 
compressed along both sequence and feature 
dimensions~\citep{lu2025tokenized,devkota2024miniaturizing}, suggesting 
that large models devote significant capacity to redundant or entangled 
features. Collectively, these findings characterize the scaling problem; 
reverse distillation offers a solution by systematically decomposing 
representations across scales. 

\paragraph{Relationship to distillation, model combination, and continual learning.}
Traditional knowledge distillation~\citep{hinton2015distilling} compresses 
a large teacher into a smaller student; model 
soups~\citep{wortsman2022model} average weights across fine-tuned variants. 
Both aim to consolidate information into a single model. Reverse 
distillation instead decomposes representations across model scales, 
identifying each scale's unique contributions. Our use of orthogonal 
subspaces is related to continual learning methods like 
o-LoRA~\citep{wang2023orthogonal} and Adaptive 
SVD~\citep{nayak2025sculpting}, which maintain orthogonal weight subspaces 
when adapting to new tasks. However, these methods are task-specific; 
reverse distillation is task-agnostic, extracting representations from a 
multi-scale model family by maximizing residual information gain.

\paragraph{Relationship to interpretability methods.}
Attention-head analyses (``BERTology'') probe what individual heads 
encode~\citep{rogers2020primer,clark2019bert,vig2020bertology}, while 
sparse autoencoders (SAEs) enumerate latent features 
explicitly~\citep{gujral2025sae,adams2025sae}. Both inspect 
representations but neither decomposes them across model scales. Moreover, 
mapping SAE latents to biologically relevant features demands extensive 
manual annotation. Reverse distillation sidesteps this by operating 
implicitly---separating what different model scales contribute without 
pre-defining or cataloging individual features.

\section{Conclusion}




The success of reverse distillation suggests that scaling challenges in 
PLMs stem from inefficient use of representational capacity rather than 
fundamental limits in model expressiveness. A purely linear decomposition---requiring no model retraining  overhead---restores monotonic scaling and improves over baselines at the  same embedding dimensionality. This indicates that the information needed  for consistent scaling is already present in large models; the challenge is  extracting it. By reframing the question from ``when do large models help?'' to ``how can we systematically combine contributions across scales?'', reverse distillation opens new avenues for representation analysis and more effective scaling strategies.

\paragraph{Limitations and Future Work.}
Our current framework relies on linear decomposition, which provides 
theoretical guarantees and interpretability but may leave nonlinear 
relationships between model scales unexploited. Initial explorations of 
nonlinear mappings show improved reconstruction ($8M \rightarrow 35M~R^2 = 
0.528$ vs.\ $0.422$; $650M \rightarrow 3B~R^2 = 0.400$ vs.\ $0.261$), 
suggesting room to further extract unique features from larger models. 
Nonlinear dimensionality reduction (e.g., UMAP~\citep{mcinnes2018umap}) 
could also more effectively separate residual features, and would be 
particularly valuable for model families where different-sized models share 
the same embedding dimension.

A complementary direction is to use parameter-efficient fine-tuning (e.g., 
LoRA) to produce reverse-distilled embeddings directly at the last layer of 
a large model. This would enable generative use-cases and likelihood-based 
probing while requiring only a single forward pass, eliminating the current 
multi-model inference overhead.

Finally, we plan to test reverse distillation on other biological foundation 
models---including autoregressive PLMs like ProGen, as well as models for 
genomics and drug discovery---and on foundation models outside biology. We anticipate that the core principle of leveraging smaller models to effectively decompose larger ones may 
apply broadly wherever scaling challenges persist.

\subsubsection*{Acknowledgments} R.S. acknowledges the support of the Whitehead Scholarship at the Duke University School of Medicine. D.C. acknowledges the support of the Karsh International Scholars Program from Duke University. S.S. acknowledges support from the Simons Foundation.    

\section{Reproducibility Statement}

\edit{To facilitate easy verification and replication of our results, we have modularized our source code, added README documentation and provided information on how to train the reverse distillation models. Code is available on GitHub at \url{https://github.com/rohitsinghlab/plm_reverse_distillation}, and pre-trained models are available to download via the package.}

\clearpage
\bibliography{iclr2026_conference}
\bibliographystyle{iclr2026_conference}

\clearpage
\appendix

\section{Appendix}
\subsection{Ablation on Linear Mapping during Reverse Distillation (Phase 2 of Algorithm 1)}
\label{subsec:algo1_pcr_ablation}
To map the smaller model embeddings $H_r$ to the larger model embeddings $H_p$, we explored two approaches: a) an ordinary least-squares regression (OLS), and b) principle-component regression (PCR). Since there may be ``noise'' features in $H_r$, we hypothesized that using PCR would let us utilize only the informative features of $H_r$ for predicting $H_p$. Towards that, we first mapped $H_r$ to its principal components, and then applied principles from random matrix theory (specifically, the Johnstone threshold \citep{johnstone2001distribution}) to select the top $r_j$ likely-non-noise PCs, and used only these to predict $H_p$. We compare the performance of PCR and OLS in Figure \ref{fig:ablation_pcr}, showing that on PCR consistently yields stronger downstream performance across multiple mutation data sets.

\begin{figure}[ht]
    \centering
    \includegraphics[width=\textwidth]{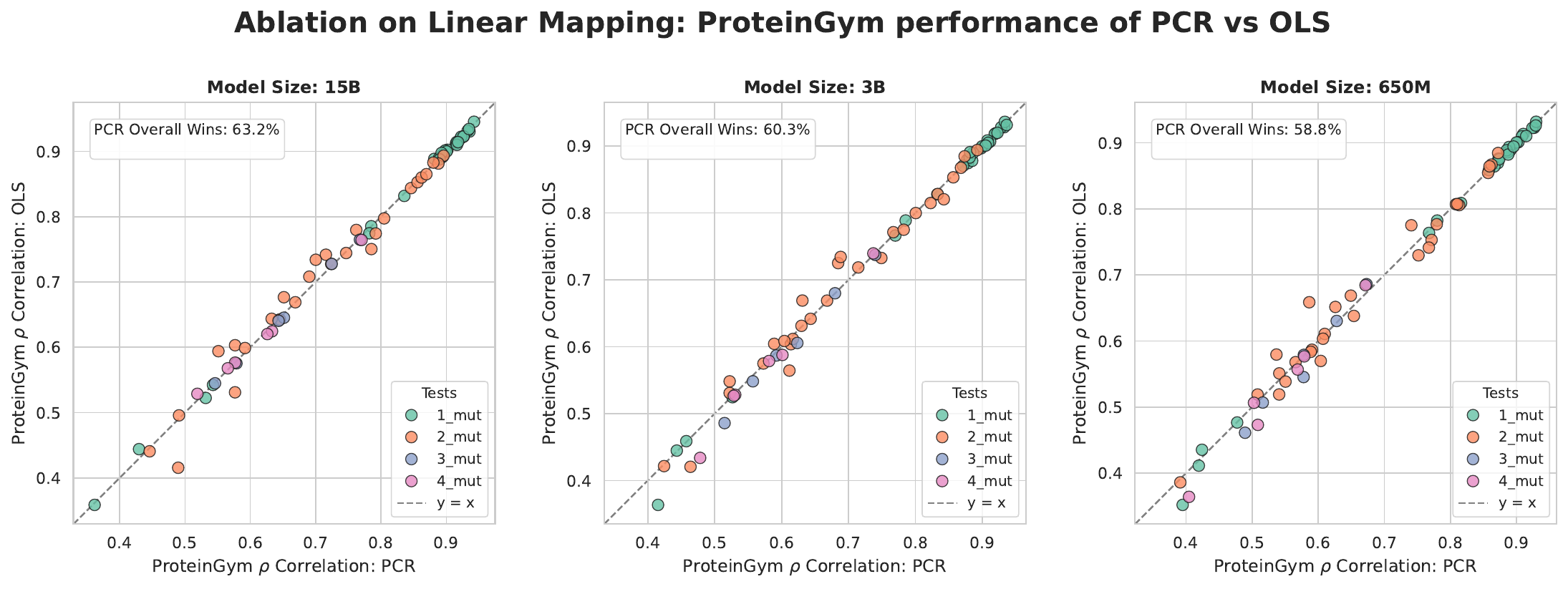}
    \caption{\footnotesize \textbf{Ablation on Linear Mapping} The plots show the performance of the rd.650M, rd.3B, and rd.15B models using both PCR and OLS and evaluated on their Spearman Correlation ($\rho$) across 37 datasets from the Protein Gym benchmark. They show that PCR consistently outperforms OLS, with overall win rates of 58.8\%, 60.3\%, and 63.2\% respectively. This suggests that isolating signal from noise in the lower-scale representations is critical for effective cross-scale distillation}
    \label{fig:ablation_pcr}
\end{figure}


\subsection{Ablation on scale-preserving feature concatenation}
\label{subsec:algo1_pca_ablation}
An alternative to our Matryoshka approach---while still preserving a linear subspace-decomposition approach---is to simply concatenate embeddings of models and then perform dimensionality reduction via principal component anlaysis (PCA). Thus, to distill embeddings $H_r$ into $H_p$, we would concatenate the two, perform PCA, and take the top $p$ components. In a sense, this represents the upper bound of representational orthogonality achievable by any linearly-constructed $p$-dimensional subspace that invokes all model scales up to $H_p$. At the same time, these representations are \textit{not} Matryoshka-style: when graduating to a higher model scale, the resulting embeddings (since they will be produced by a fresh PCA) may be entirely different. As we show in Figure \ref{fig:ablation_naive}, our reverse distillation representations performs similarly to this representation while preserving the Matryoshka property. 

\begin{figure}
    \centering
    \includegraphics[width=\textwidth]{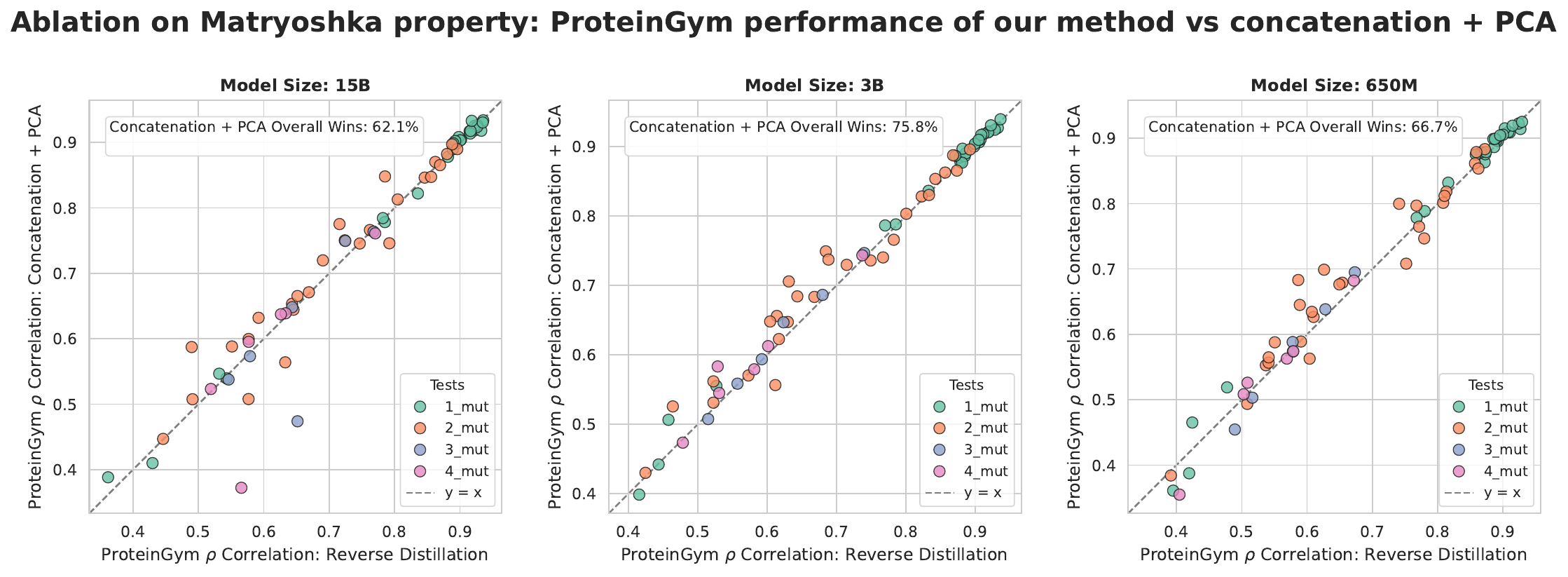}
    \label{fig:ablation_naive}

\centering
\small{%
\begin{tabular}{l | c c | c c }
\toprule
\multicolumn{5}{c}{\textbf{Naive PCA + concatenation scalability on ProteinGym DMS Datasets}} \\
\midrule
& \textbf{naive 3B $>$ naive 650M} & \textbf{naive 15B $>$ naive 3B} & \textbf{rd 3B $>$ rd 650M} & \textbf{rd 15B $>$ rd 3B} \\
\midrule
\multicolumn{5}{l}{\textbf{ProteinGym DMS Datasets with 1 and 2 mutations}} \\
\addlinespace
1 mut & 85.71\% & 71.43\% & 92.86\% & 85.71\% \\
2 mut & 76.19\% & 57.14\% & 67.86\% & 75.00\% \\
\midrule
\multicolumn{5}{l}{\textbf{ProteinGym DMS Datasets with $>$2 mutations}} \\
\addlinespace
1 mut & 85.71\% & 14.29\% & 100.00\% & 100.00\%\\
2 mut & 85.71\% & 71.43\% &100.00\% & 66.67\%\\
3 mut & 85.71\% & 71.43\% & 100.00\% & 100.00\% \\
4 mut & 85.71\% & 71.43\% & 100.00\% & 100.00\% \\
\bottomrule
\end{tabular}%
}
\caption{\footnotesize \textbf{Ablation on Matryoshka property}: \textbf{(Top)} Comparison of Spearman correlation $\rho$ on ProteinGym datasets between our Reverse Distillation approach and a PCA + concatenation baseline. While the baseline provides a theoretical upper bound for linear subspace decomposition, \textbf{(Bottom)} highlights its critical limitation: unlike our method, it relies on fresh projections at each scale and therefore cannot preserve structural consistency or scalability across the 650M, 3B, and 15B models.}

\end{figure}


\end{document}